 
\documentclass[twocolumn,10pt]{asme2ej}
       
  
%

\usepackage{overpic,rotating,contour,footnote}                        
\usepackage{graphics,subfigure,enumerate}
\usepackage{graphicx,psfrag}
\usepackage{epsfig}  
\usepackage{amssymb,amsmath} 
\usepackage[all]{xy}
\usepackage[color=aliceblue]{todonotes}

\usepackage{xcolor}
\definecolor{aliceblue}{rgb}{0.94, 0.97, 1.0}

  
\usepackage{mathptmx}       
\usepackage{helvet,overpic}         
\usepackage{courier}        
%
\usepackage{graphicx,subfigure}        
\usepackage[bottom]{footmisc} 
\usepackage{amsmath,amssymb,epsfig} 
      

\def\Beweisende{\square}            
\def\BewEnde{\hfill{\Beweisende}}


\def\phm{{\hphantom{-}}} 



\def\phi{\varphi}


\def\CC{{\mathbb C}}
\def\RR{{\mathbb R}}

\def\PP{{\mathbb P}}






\def\Vkt#1{{\mathbf #1}} 






\newcommand{\go}[1]{{\sf #1}}


\newtheorem{thm}{Theorem}
\newtheorem{lem}{Lemma}
\newtheorem{rmk}{Remark} 

\newtheorem{assumption}{Assumption}

\newtheorem{cor}{Corollary}


\title{Addendum to Pentapods with Mobility 2}
\author{Georg Nawratil
    \affiliation{
	Institute of Discrete Mathematics and Geometry\\
	Vienna University of Technology\\
	Wiedner Hauptstrasse 8-10/104, Vienna 1040, Austria\\
  Email: nawratil@geometrie.tuwien.ac.at
    }	  
}
\author{Josef Schicho
    \affiliation{
	Johann Radon Institute for Computational and Applied Mathematics\\
	Austrian Academy of Sciences\\
	Altenberger Strasse 69, Linz 4040, Austria\\
    Email: josef.schicho@ricam.oeaw.ac.at
    }	 
}

\begin{document}

\maketitle    

\begin{abstract}
{\it
In a foregoing publication the authors studied pentapods with mobility 2, where neither all platform anchor points nor 
all base anchor points are located on a line. It turned out that the given classification is incomplete. 
This addendum is devoted to the discussion of the missing cases resulting in additional solutions already known to Duporcq.}
\end{abstract}

\section{Introduction}\label{intro}

The geometry of a pentapod is given by the five base anchor points $\go M_i$ with coordinates
$\Vkt M_i:=(A_i,B_i,C_i)^T$ with respect to the fixed system and by the five platform anchor points $\go m_i$ 
with coordinates $\Vkt m_i:=(a_i,b_i,c_i)^T$ with respect to the moving system (for $i=1,\ldots ,5$). 
Each pair $(\go M_i,\go m_i)$ of corresponding anchor points is connected by a SPS-leg, where only 
the prismatic joint (P) is active and the spherical joints (S) are passive. 

If the geometry of the manipulator is given, as well as the lengths of the five pairwise distinct legs, the 
pentapod has generically mobility 1 according to the formula of Gr\"ubler. The corresponding motion is called a 
1-dimensional self-motion of the pentapod. But, under particular conditions, the manipulator can 
gain additional mobility.  We can focus on pentapods with mobility 2, as those with higher-dimensional 
self-motions are already known (cf.\ \cite[Corollary 1]{asme_ns}).

\subsection{Reason for the Addendum}\label{reason}

The classification of pentapods with mobility 2 given in \cite{asme_ns}
was based on the following theorem of \cite{gns2}: 

\begin{thm}\label{thm1a}
If the mobility of a pentapod is 2 or higher, then one of the following conditions holds
\footnote{After a possible necessary renumbering of anchor points and exchange of the platform and the base.}:
        \begin{enumerate}[(a)]
        \item  
        The platform and the base are similar. This is a so-called equiform pentapod.
        \item
        The platform and the base are planar and affine equivalent. This is a so-called planar affine pentapod.
        \item
        There exists $p\leq 5$ such that $\go m_1,\ldots,\go m_p$ are collinear and 
        $\go M_{p+1},\ldots ,\go M_5$ are equal; i.e.,  $\go M_{p+1}=\ldots =\go M_5$.
        \item
        $\go M_1,\go M_2,\go M_3$ are located on the line $\go g$ which is parallel to the line $\go h$ spanned 
				by $\go M_4$ and $\go M_5$. Moreover $\go m_1,\go m_2,\go m_3$ are located on the line $\go g^{\prime}$ 
				which is parallel to the line $\go h^{\prime}$ spanned 
				by $\go m_4$ and $\go m_5$.
        \end{enumerate}
\end{thm}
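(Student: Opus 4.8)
The plan is to pass from the Euclidean picture to the kinematic image space. I would represent a rigid-body displacement $\Vkt x\mapsto R\,\Vkt x+\Vkt t$ (an element of $SE(3)$, with $R\in SO(3)$) by its Study parameters $(e_0:e_1:e_2:e_3:f_0:f_1:f_2:f_3)\in\PP^7$, subject to the Study quadric $\Psi:\; e_0f_0+e_1f_1+e_2f_2+e_3f_3=0$, where the proper Euclidean displacements form the open subset on which $N:=e_0^2+e_1^2+e_2^2+e_3^2\neq 0$. Under this mapping each leg-length condition $\Vert \Vkt M_i-(R\,\Vkt m_i+\Vkt t)\Vert^2=\ell_i^2$ becomes, after clearing $N$, a homogeneous quadric $\Omega_i\subset\PP^7$. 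The configuration set of the pentapod is then the complex projective variety $X:=\Psi\cap\Omega_1\cap\dots\cap\Omega_5$, and ``mobility $\ge 2$'' translates into $\dim_\CC X\ge 2$ for the relevant component.

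Next I would exploit the common structure of the leg quadrics. Written out, each $\Omega_i$ contains the same pure-translation part, namely the terms quadratic in $f_0,\dots,f_3$; the leg-dependent data enter only through the coefficient of $N$ and through the $e$-quadratic terms built from $\Vkt M_i\cdot R\,\Vkt m_i$. Hence every difference $\Omega_i-\Omega_1$ (for $i=2,\dots,5$) loses that common part and becomes a quadric that is at most linear in the translation coordinates $f$. Replacing four of the original quadrics by these differences, I would present $X$ as $\Psi\cap\Omega_1$ cut by four equations linear in $f$; since $\Psi$ is also linear in $f$, this exhibits $X$ as fibred over the $\PP^3$ of rotation parameters $e$, with the fibre over a generic $e$ governed by a small linear-plus-one-quadric system in $f$.

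The core of the argument is bond theory. Since $X$ has dimension $\ge 2$ and is not contained in the exceptional quadric $\{N=0\}$ (otherwise no real displacements, hence no motion, would occur), the projective dimension theorem forces $X\cap\{N=0\}$ to be a curve; its points are the bonds of the self-motion. I would classify the admissible bonds: each bond is a point of $\Psi\cap\{N=0\}$ compatible with all five (difference) leg equations, and its existence imposes a sharp incidence relation on the anchor points — that certain $\Vkt m_i$, respectively $\Vkt M_i$, become collinear, coincident, coplanar, or related by a similarity or an affinity. Running through the possible bond curves and translating them back into the Euclidean frame is what produces the four alternatives: a full, two-parameter bond locus forces the strong degeneracies (a) equiform and (b) planar-affine, while the partial bond configurations force the collinearity/coincidence patterns (c) and (d).

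The step I expect to be hardest is establishing completeness of the list. The bond curve may be reducible, and each component encodes only a partial degeneration, so I must combine the information from all components consistently, discard spurious solutions for which $X$ drops below dimension $2$ or carries no real points, and treat separately the boundary positions in which the base or the platform anchor points are themselves collinear. These special positions are exactly the ones the original classification overlooked, and a careful bookkeeping of the bond multiplicities along each component of $X\cap\{N=0\}$ is what recovers the missing cases.
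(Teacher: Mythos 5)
Your proposal cannot succeed as written, for a reason this paper makes explicit: the statement you are trying to prove is false in the form given. The addendum exists precisely because the list (a)--(d) is not exhaustive. The Duporcq pentapod --- obtained from two congruent complete quadrilaterals with opposite vertex labeling by deleting the sixth leg of the associated architecturally singular hexapod --- has a two-parametric line-symmetric self-motion (mobility 2), yet its platform and base are neither similar nor planar affine equivalent, it has no coinciding anchor points, and its collinearities are of the form $\go M_1,\go M_2,\go M_3$ and $\go M_3,\go M_4,\go M_5$ (two lines crossing at $\go M_3$, with the corresponding crossing triples in the platform), not the parallel-lines pattern of (d). Hence no correct argument can terminate in exactly the four alternatives (a)--(d); the corrected theorem requires the additional case (e) quoted in Subsection~\ref{reason}.

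The place where your sketch breaks is the step you yourself flag as hardest: ``running through the possible bond curves and translating them back into the Euclidean frame.'' Your taxonomy of what a bond forces (``collinear, coincident, coplanar, or related by a similarity or an affinity'') is too coarse. In the finer classification used in this line of work (collinearity, similarity, inversion and butterfly bonds, cf.\ Subsection~\ref{confbonds}), a bonding curve consisting of butterfly bonds together with a single similarity bond is compatible with the Duporcq configuration (this is exactly Lemma~\ref{lem:0}), and translating that bond structure back to the Euclidean frame yields case (e), not any of (a)--(d); your bookkeeping would either miss this component or misclassify it. Note also that the present paper does not prove the theorem at all: it is quoted from \cite{gns2}, where the argument runs through M\"obius photogrammetry (profiles of projected point configurations on the Del Pezzo quintic $P_5$) rather than through the Study-quadric fibration you set up, and the addendum then supplies the sufficiency analysis for the missing case. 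To salvage your approach you would have to carry out the bond classification in full, including butterfly bonds, and enlarge the conclusion by case (e).
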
 
During the literature research for the article \cite[Section 1]{icosapod}, we came across 
the work \cite{duporcq} of Duporcq, which describes the 
following remarkable motion (see Fig.\ \ref{fig:complete_quadrilaterals}):

{\it Let $\go M_1,\ldots ,\go M_6$ and $\go m_1, \ldots, \go m_6$ be the 
vertices of two complete quadrilaterals, which are congruent. Moreover the 
vertices are labeled in a way that $\go m_i$ is the opposite vertex of 
$\go M_i$ for $i \in \{1, \dotsc, 6\}$.  
Then there exist a two-parametric line-symmetric motion where 
each $\go m_i$ is running on spheres centered in $\go M_i$.}

\begin{figure}[h!] 
\begin{center}   
  \begin{overpic}[width=40mm]{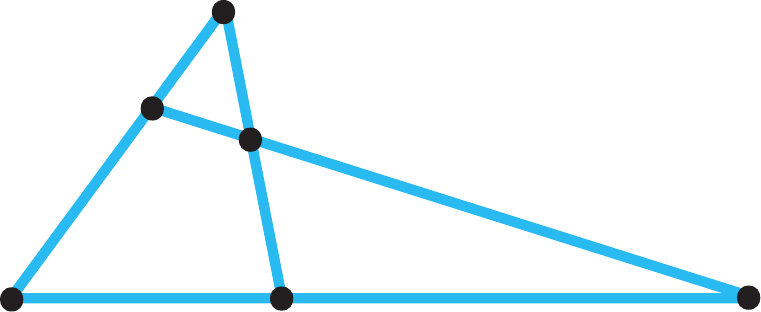}
    \begin{small}
      \put(9,5){$\go M_1$}
      \put(39,5){$\go M_2$}
      \put(92.5,6){$\go M_3$}
      \put(7,27){$\go M_5$}
      \put(35,25){$\go M_4$}
      \put(33,38){$\go M_6$}
    \end{small}
  \end{overpic} 
  \,
  \begin{overpic}[width=40mm]{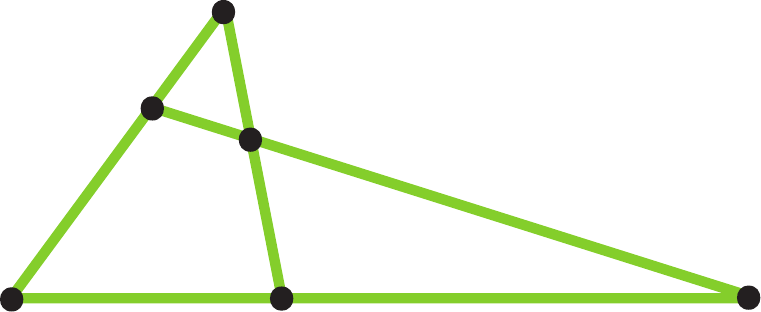}
    \begin{small}
      \put(8.5,5){$\go m_4$}
      \put(39,5){$\go m_5$}
      \put(93,6){$\go m_6$}
      \put(7,27){$\go m_2$}
      \put(35,25){$\go m_1$}
      \put(33,38){$\go m_3$}
    \end{small}
  \end{overpic} 
\end{center}
\caption{Illustration of Duporcq's complete quadrilaterals.
}
  \label{fig:complete_quadrilaterals}
\end{figure}     

It can easily be checked that this configuration of anchor points 
corresponds to an architecturally singular hexapod (e.g.~\cite{roschel} 
or~\cite{karger_arch}). As architecturally singular manipulators are 
redundant we can remove any leg --- without loss of generality (w.l.o.g.) we suppose that this is the sixth 
leg --- without changing the direct kinematics of the mechanism. Therefore 
the resulting pentapod $\go M_1,\ldots ,\go M_5$ and 
$\go m_1,\ldots ,\go m_5$, which we call a {\it Duporcq pentapod} 
for short, has also a two-parametric line-symmetric self-motion. 
This yields a counter-example to Theorem \ref{thm1a}, 
but the flaw can be fixed by adding the following case to Theorem \ref{thm1a} (cf.\ \cite{erratum}):
{\it
        \begin{enumerate}[(e)]
        \item  
				The following triples of points are collinear:
				\begin{equation*}
				\go M_1,\go M_2,\go M_3, \quad
				\go M_3,\go M_4,\go M_5, \quad
				\go m_3,\go m_1,\go m_i, \quad
				\go m_3,\go m_j,\go m_k,
				\end{equation*}
				with pairwise distinct $i,j,k\in\left\{2,4,5\right\}$. Moreover the points $\go M_1,\ldots ,\go M_5$ 
				are pairwise distinct as well as the points $\go m_1,\ldots ,\go m_5$. 
				\end{enumerate}
}
As Theorem \ref{thm1a} only gives necessary conditions, 
the addendum is devoted to the determination of sufficient ones  
for the 2-dimensional mobility of pentapods belonging to item (e).  
In detail the paper is structured as follows:

In Section \ref{sec:moeb} further necessary conditions are obtained by 
means of M\"obius photogrammetry, which restrict the pentapod designs of (e) to 
three possible cases, up to affinities of the planar platform and the planar base. 
In Section \ref{bonds} we repeat the theory of bonds based on two different embeddings of
$SE(3)$ and prove Lemmata \ref{lem:0} and \ref{lem:1} as well as Corollaries \ref{cor:0} and \ref{cor:1}. 
Based on these results we show in Section \ref{comp} that only the 
Duporcq pentapods were missed by our classification given in \cite{asme_ns}. 
The consequence of this result for article \cite{asme_ns} are summed up in the conclusions 
(Section \ref{sec:conclusion}).


\section{M\"obius photogrammetric considerations}\label{sec:moeb}

In Subsection \ref{basic} we recall some basics of M\"obius photogrammetry, 
which are needed for the construction of the three possible pentapod designs (up to affinities 
of the planar platform and the planar base) given in Subsection \ref{10pd}.

\subsection{Basics}\label{basic}

First of all we need the notation of a so-called M\"obius transformation $\gamma$ of the plane. 
If we combine the planar Cartesian coordinates $(x,y)$ to a 
complex number $z:=x+iy$, then $\gamma(z)$ can be defined as a  rational function of the form 
\begin{equation}
\gamma:\quad z\mapsto \frac{z_1z+z_2}{z_3z+z_4},
\end{equation}
with complex numbers $z_1, \ldots, z_4$ satisfying $z_1z_4 - z_2z_3 \neq 0$. Therefore 
M\"obius transformations can be seen as the projective transformations of the complex 
projective line $\PP^1_{\CC}$. 

We identify by the mapping $\iota$ the unit-sphere $S^2$ of the Euclidean 3-space $\RR^3$ with an algebraic curve  $C:=\left\{x^2+y^2+z^2=0\right\}$ 
in $\PP^2_{\CC}$. In detail this identification works as follows: Let $\Vkt u\in S^2$ with $\Vkt u=(u_1,u_2,u_3)$. Then determine $\Vkt v,\Vkt w\in S^2$ 
with  $\Vkt v=(v_1,v_2,v_3)$ and  $\Vkt w=(w_1,w_2,w_3)$ in a way that $\Vkt u,\Vkt v, \Vkt w$ determine a right-handed basis of $\RR^3$. Then the map 
$\iota :\, S^2 \rightarrow  \PP^2_{\CC}$ is given by:
\begin{equation}
\iota: (u_1,u_2,u_3) \mapsto (v_1+iw_1:v_2+iw_2:v_3+iw_3)
\end{equation}
as a different choice of $\Vkt v,\Vkt w\in S^2$ yields to the same point in $\PP^2_{\CC}$.

By denoting the vector $(\go M_1,\ldots ,\go M_5)$ of five points in $\RR^3$ by $\mathfrak{M}$,  
the orthogonal parallel projection $\pi$ of $\mathfrak M$ along the direction 
associated with $c\in C$ is given by $\pi_c(\mathfrak{M})$. By writing the planar Cartesian coordinates of each projected point 
as a complex number we get $\pi_c(\mathfrak{M})\in (\PP^1_{\CC})^5$.

\begin{rmk}\label{affine}
Assume that $\go M_1,\ldots ,\go M_5$ is known to be coplanar, the 5-tuple can be reconstructed from 
$\pi_c(\mathfrak{M})$ only up to affinity, as the orientation of the carrier plane of the 5 points with 
respect to $\iota^{-1}(c)$ is not known. This also corrects \cite{erratum}, where "similarity" is written 
instead of "affinity".
\hfill $\diamond$
\end{rmk}

The equivalence class under the action of the 
M\"obius group $\Gamma$ on $\pi_c(\mathfrak{M})$ is the so-called 
M\"obius picture $[\pi_c(\mathfrak{M})]_{\Gamma}$ of  $\mathfrak M$ along the direction associated with $c\in C$.  

The set of all these equivalence classes $[(\PP^1_{\CC})^5]_{\Gamma}$ 
can be viewed as a quintic surface $P_5\in\PP_{\CC}^5$ known as Del Pezzo surface. For 
$\pi_c(\mathfrak{M})$ with coordinates $(x_1+iy_1,\ldots ,x_5+iy_5)\in (\PP^1_{\CC})^5$ the corresponding point of the 
Del Pezzo surface is defined as $(\phi_0:\phi_1:\phi_2:\phi_3:\phi_4:\phi_5)$ with: 
\begin{equation}\label{phi}
\begin{split}
\phi_0&:=D_{12}D_{23}D_{34}D_{45}D_{15},\\
\phi_1&:=D_{12}D_{25}D_{15}D_{34}D_{34},\\
\phi_2&:=D_{12}D_{23}D_{13}D_{45}D_{45},\\
\phi_3&:=D_{23}D_{34}D_{24}D_{15}D_{15},\\
\phi_4&:=D_{34}D_{45}D_{35}D_{12}D_{12},\\
\phi_5&:=D_{14}D_{45}D_{15}D_{23}D_{23},
\end{split}
\end{equation}
and $D_{ij}:=x_iy_j-x_jy_i$. 
For details of this construction 
of $P_5$ we refer to \cite[Section 3.1]{gns2}, but it is important to note that $P_5$ carries 10 lines 
$L_{ij}$ corresponding to equivalence classes for which the projection of the $i$th and the $j$th point coincide 
($\Leftrightarrow$ $D_{ij}=0$) for pairwise distinct $i,j\in\left\{1, \ldots , 5\right\}$. 

We are interested in the set of M\"obius pictures of $\mathfrak M$ under all $c\in C$. 
By applying the so-called photographic map $f_{\mathfrak M}$ of $\mathfrak M$ given by
\begin{equation}
f_{\mathfrak M}: C\rightarrow P_5 \quad\text{with}\quad
c\mapsto  [\pi_c(\mathfrak{M})]_{\Gamma}
\end{equation}
we can compute the so-called profile $p_{\mathfrak M}$ of $\mathfrak M$ as the Zariski closure of $f_{\mathfrak M}(C)$; i.e.\ 
$ZarClo(f_{\mathfrak M}(C))$. 
Note that the profile  is a curve on $P_5$. 
 
According to \cite[Remark 3.5]{gns2} the M\"obius picture cannot be defined for those values $c\in C$, for which the 
associated directions are parallel to three collinear points of $\mathfrak M$, as in this case all five $\phi_i$'s are equal 
to zero. In our case two such directions exist, 
which are parallel to the carrier line of $\go M_1,\go M_2,\go M_3$ (i.e. the metallic direction $m$) and 
$\go M_3,\go M_4,\go M_5$ (i.e. the blue direction $b$), respectively (cf.\ Fig.\ \ref{fig1}). 
However, we can extend $f_{\mathfrak M}$ also to these directions by canceling out the 
common vanishing factor. 
For our given base (cf.\ Fig.\ \ref{fig1}) this common factor is $D_{12}=D_{13}=D_{23}$ (resp.\ $D_{34}=D_{35}=D_{45}$) for the 
metallic (resp.\ blue) direction, thus Eq.\ (\ref{phi}) yields $(0:D_{25}D_{34}:0:D_{24}D_{15}:0:0)$ 
(resp.\ $(0:0:0:D_{24}D_{15}:0:D_{14}D_{23}))$. Therefore the $m$-direction (resp.\ $b$-direction) is mapped on a point 
of $L_{45}$ (resp.\ $L_{12}$); i.e. 
\begin{equation}\label{eq:base}
[\pi_m(\mathfrak{M})]_{\Gamma}\in L_{45}, \quad
[\pi_b(\mathfrak{M})]_{\Gamma}\in L_{12}. 
\end{equation}
\begin{figure}[h!]
\begin{center}  
 \begin{overpic}  
    [width=70mm]{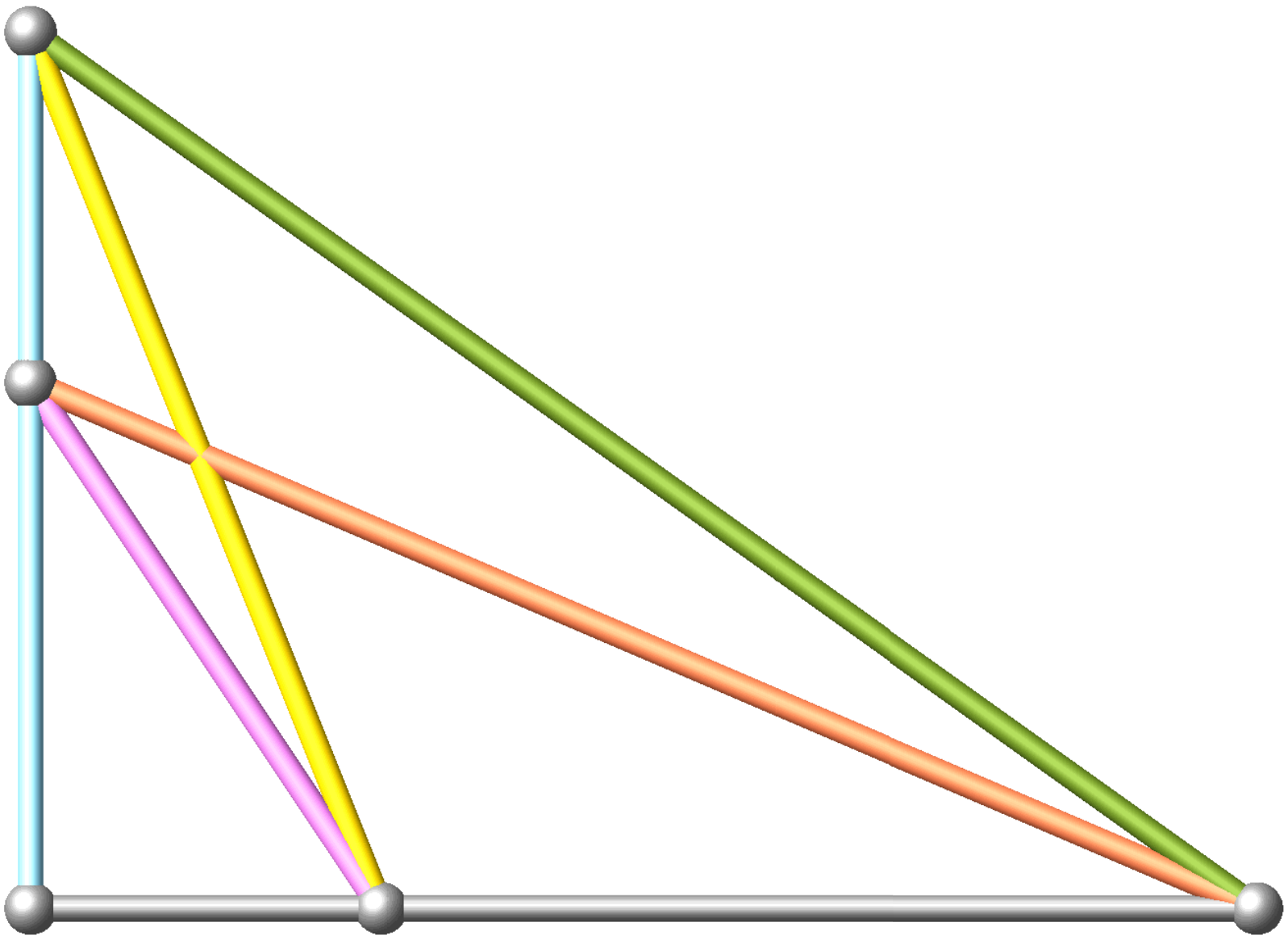}
	\put(45,43){\begin{turn}{-34}green ($g$)\end{turn}}	
	\put(40,21){\begin{turn}{-23}orange ($o$)\end{turn}}	
	\put(47,5){metallic ($m$)}
	\put(22,30){\begin{turn}{-70}yellow ($y$)\end{turn}}	
	\put(9,23){\begin{turn}{-60}pink ($p$)\end{turn}}	
	\put(-4,30){\begin{turn}{-90}blue ($b$)\end{turn}}	
	\put(-7,1){$\go M_3$}
	\put(32,5){$\go M_1$}
	\put(101,1){$\go M_2$}
	\put(-7,39){$\go M_4$}
	\put(-7,69){$\go M_5$}
  \end{overpic} 
\end{center}
\caption{
The photographic map sends any direction vector parallel to a line through 2 (but not 3) of the points $\go M_i$, $\go M_j$ 
to the unique point in the M\"obius picture on the line $L_{ij}$ of the quintic surface $P_5$. In the base configuration above,  
green ($g$) is sent to $L_{25}$, orange ($o$) is sent to $L_{24}$, yellow ($y$) is sent to $L_{15}$ and pink ($p$) is sent to $L_{14}$. 
It is not clear whether the directions blue ($b$) and metallic ($m$) are being sent; later, we will show that 
$b$ is sent to  $L_{12}$ and $m$ is sent to $L_{45}$.
}
  \label{fig1} 
\end{figure}

\subsection{Three possible designs}\label{10pd}

It is known (see \cite[Section 4]{gns2}) that for a pentapod  with mobility 2, which belongs to item (e) of 
Theorem \ref{thm1a}, the profiles $p_{\mathfrak M}$ and $p_{\mathfrak m}$ have to coincide, where  $\mathfrak{m}$ denotes the 
vector of five points $(\go m_1,\ldots ,\go m_5)$. 
As a consequence there has to be a one-to-one correspondence 
between $p_{\mathfrak M}$ and $p_{\mathfrak m}$, which is used to 
reconstruct in three ways $\mathfrak m$ (up to affinity; cf.\ Remark \ref{affine}), 
under the assumption that  $\mathfrak M$ is given (cf.\ Fig.\ \ref{fig1}). 

\begin{assumption}\label{ass:1}
W.l.o.g.\ we can assume that the reconstruction $\mathfrak m$ 
is affinely transformed in a way that the M\"obius pictures (and their extension in the case of three collinear points) 
of $\mathfrak{m}$ and $\mathfrak{M}$ with respect to any direction $c$ are identical.
\end{assumption}

First of all we have to distinguish the following three cases, which are 
implied by the three possible collinearity configurations stated in (e):
\begin{enumerate}[1.]
\item
$i=2$: W.l.o.g.\ we can set $j=4$ and $k=5$. 

One can select $\go m_1$ arbitrarily. As $L_{14}\cap p_{\mathfrak m}$ has to coincide with 
$L_{14}\cap p_{\mathfrak M}$ the line $\go m_1\go m_4$ has to be parallel to $\go M_1\go M_4$. Now we can select any point ($\neq \go m_1$) on the 
parallel line to $\go M_1\go M_4$ through $\go m_1$ as $\go m_4$. 
The direction of $\go m_1\go m_2$ is not 
uniquely determined as the line $\go M_1\go M_2$ also contains the point $\go M_3$. Due to the one-to-one correspondence 
between the two profiles, 
$L_{12}\cap p_{\mathfrak m}$ has to correspond with one of the two points on $p_{\mathfrak M}$, which do not admit a M\"obius picture. 
Therefore there are the following two possibilities:   
	\begin{enumerate}[(a)]
	\item
	$\go m_1\go m_2$ is parallel to $\go M_1\go M_2$: As a consequence $\go m_4\go m_5$ has to be parallel to $\go M_4\go M_5$. 
	
	Moreover as $L_{24}\cap p_{\mathfrak m}$ has to coincide with $L_{24}\cap p_{\mathfrak M}$ the line $\go m_2\go m_4$ has to be parallel to $\go M_2\go M_4$.
	Therefore we get $\go m_2$ as the intersection point of a parallel line to $\go M_1\go M_2$ through $\go m_1$ and a parallel line to 
	$\go M_2\go M_4$ through $\go m_4$. 
	
	In the same way $L_{15}\cap p_{\mathfrak m}$ has to coincide with $L_{15}\cap p_{\mathfrak M}$ and therefore 
	$\go m_5$ can be obtained as the intersection point of a parallel line to $\go M_4\go M_5$ through $\go m_4$ and a parallel line to 
	$\go M_1\go M_5$ through $\go m_1$.
	
	Although all points are reconstructed, we have to check if the last remaining condition is fulfilled, namely if 
	$\go m_2\go m_5$ is parallel to $\go M_2\go M_5$. As this can easily be verified, we get  
	reconstruction 1 illustrated in Fig.\ \ref{fig2}(left), which is in fact identical with $\mathfrak{M}$ (cf.\ Fig.\ \ref{fig1}).
\begin{figure}[h!]
\begin{center}  
 \begin{overpic}  
    [width=70mm]{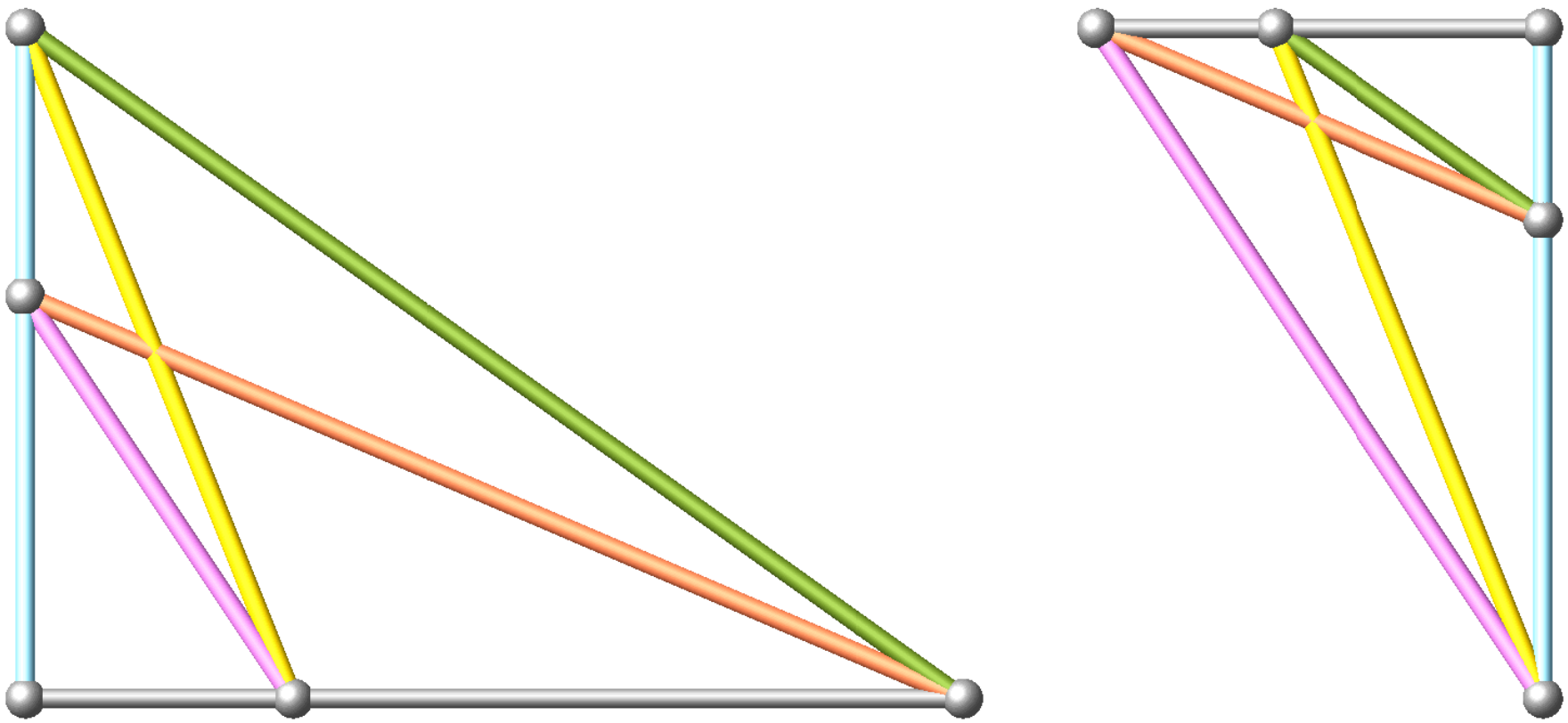}
	\put(-6,1){$\go m_3$}
	\put(20,3.5){$\go m_1$}
	\put(64,1){$\go m_2$}
	\put(-6,25){$\go m_4$}
	\put(-6,44){$\go m_5$}
	\put(101,1){$\go m_1$}	
	\put(101,31){$\go m_2$}	
	\put(101,44){$\go m_3$}	
	\put(62,44){$\go m_4$}
	\put(79,47){$\go m_5$}	
  \end{overpic} 
\end{center}
\caption{
The two possible reconstructions of the platform configuration from the M\"obius picture, under the additional 
assumption that $\go m_3$ is the intersection of lines $\go m_1\go m_2$ and $\go m_3\go m_4$. 
Note that the line $\go m_2\go m_5$ must have 
direction $g$, the line $\go m_2\go m_4$ must have direction $o$, the line $\go m_1\go m_5$ must have direction $y$ and 
the line $\go m_1\go m_4$ must have direction $p$. The left configuration coincides with the base configuration. We will see 
later that the right configuration is not compatible because the lines through $\go m_1,\go m_2,\go m_3$ and 
$\go m_3,\go m_4,\go m_5$, respectively,  do not have the correct directions. 
}
  \label{fig2} 
\end{figure}  
	\item
	$\go m_1\go m_2$ is parallel to $\go M_4\go M_5$: As a consequence $\go m_4\go m_5$ has to be parallel to $\go M_1\go M_2$. 
	
	Analogous arguments as in the above case with respect to the swapped directions yield a further candidate platform $\mathfrak{m}$ 
	illustrated in Fig.\ \ref{fig2}(right). 
	Calculation of the M\"obius picture of $\mathfrak{m}$ with respect to the directions $m$ and $b$ according to Eq.\ (\ref{phi}) 
	shows
	\begin{equation}\label{falsch}
	[\pi_m(\mathfrak{m})]_{\Gamma}\in L_{12}, \quad
	[\pi_b(\mathfrak{m})]_{\Gamma}\in L_{45}.  
	\end{equation}
	Due to Eq.\ (\ref{eq:base}), $\mathfrak{m}$ and $\mathfrak{M}$ do not have the same M\"obius picture with respect to the directions 
	$m$ and $b$; 	a contradiction.
	\end{enumerate}	
\item	
$i=5$: W.l.o.g.\ we can set $j=2$ and $k=4$. 

For the same reasons as in item 1 we can select $\go m_1$ arbitrarily and can choose any point ($\neq \go m_1$) on the 
parallel line to $\go M_1\go M_4$ through $\go m_1$ as $\go m_4$. Moreover the following two subcases can also be reasoned analogously to item 1: 
	\begin{enumerate}[(a)]
	\item
	$\go m_1\go m_2$ is parallel to $\go M_1\go M_2$: As a consequence $\go m_4\go m_5$ has to be parallel to $\go M_4\go M_5$. 

		In this case we also get Eq.\ (\ref{falsch}), which implies the same contradiction as in case 1(b).
	
	\item
	$\go m_1\go m_2$ is parallel to $\go M_4\go M_5$: As a consequence $\go m_4\go m_5$ has to be parallel to $\go M_1\go M_2$.
	
	As now the line $\go m_2\go m_4$ also contains the point $\go m_3$ the corresponding direction does not admit a 
	M\"obius picture. Due to the one-to-one correspondence between $p_{\mathfrak M}$ and $p_{\mathfrak m}$ again two cases have 
	to be distinguished:
		\begin{enumerate}[i.]
		\item
		$\go m_2\go m_4$ is parallel to $\go M_1,\go M_5$: As a consequence $\go m_1\go m_5$ has to be parallel to $\go M_2\go M_4$.
		
		Therefore $\go m_2$ can be obtained as the intersection point of the parallel line to  $\go M_1\go M_2$ through $\go m_1$ and
		the parallel line to $\go M_1\go M_5$ through $\go m_4$. Moreover $\go m_5$ equals the intersection point of the parallel 
		line to $\go M_4\go M_5$ through $\go m_4$ and the parallel line to $\go M_2\go M_4$ through $\go m_1$.
		
		Although all points are reconstructed, we have to check again if the last remaining condition is fulfilled, namely if 
	  $\go m_2\go m_5$ is parallel to $\go M_2\go M_5$. As this can easily be verified, we get reconstruction 2
		illustrated in Fig.\ \ref{fig4}(right). 
\begin{figure}[h!]
\begin{center}  
 \begin{overpic}  
    [width=70mm]{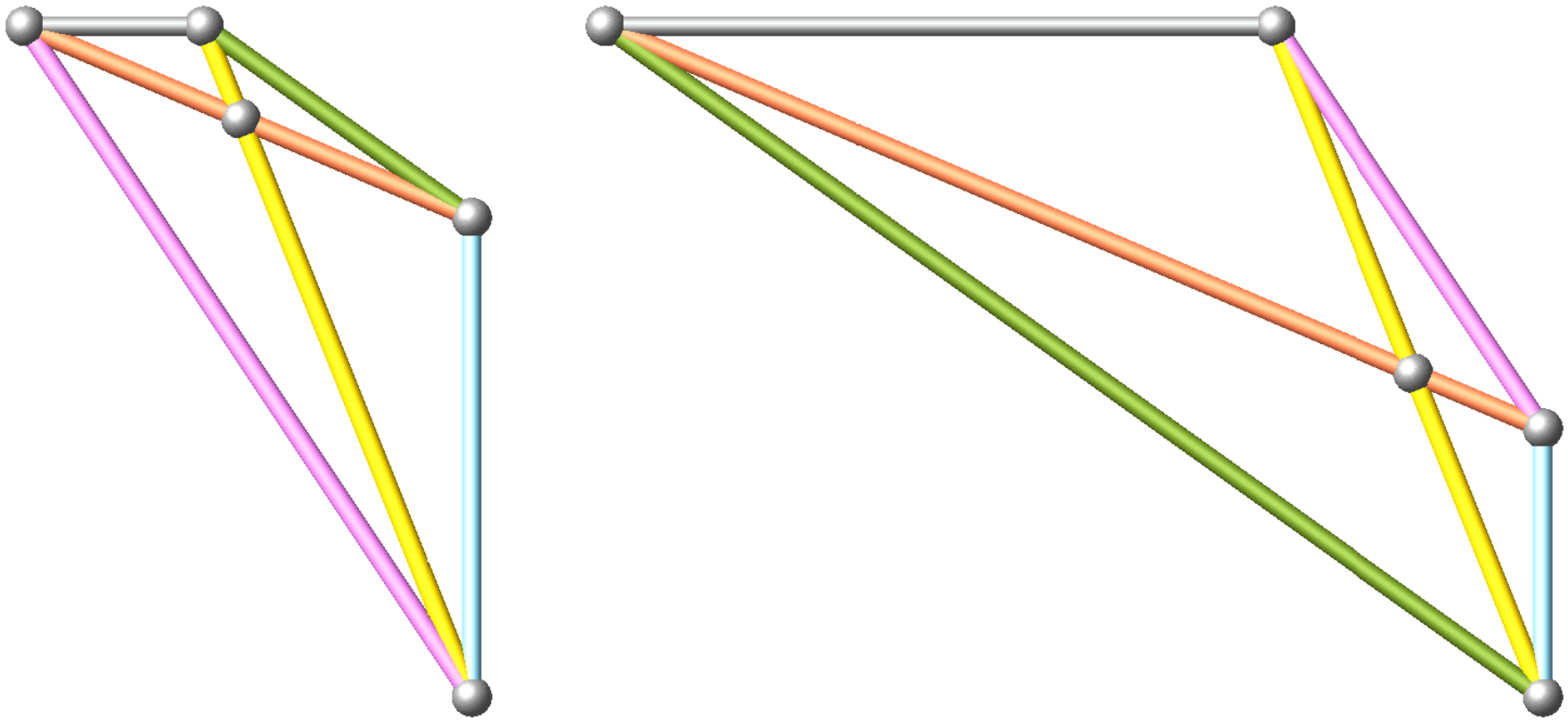}
	\put(9.5,34){$\go m_3$}
	\put(33,1){$\go m_1$}
	\put(33,30.5){$\go m_2$}
	\put(15.5,44){$\go m_5$}
	\put(-6,44){$\go m_4$}
	\put(101,1){$\go m_2$}	
	\put(101,17){$\go m_1$}	
	\put(83.5,18.5){$\go m_3$}	
	\put(30.5,44){$\go m_5$}
	\put(84.5,44){$\go m_4$}	
  \end{overpic} 
\end{center}
\caption{
The two possible reconstructions of the platform configuration from the M\"obius picture, under the additional 
assumption that $\go m_3$ is the intersection of lines $\go m_1\go m_5$ and $\go m_2\go m_4$. Here the directions of lines $\go m_1\go m_2$, 
$\go m_4\go m_5$, $\go m_1\go m_4$ and $\go m_2\go m_5$ are fixed to $b$, $m$, $p$ and $g$, respectively. We will later see that the left configuration 
is not compatible. The right configuration leads to a Duporcq pentapod.}
  \label{fig4} 
\end{figure}  
		\item
		$\go m_2\go m_4$ is parallel to $\go M_2\go M_4$: As a consequence $\go m_1\go m_5$ has to be parallel to $\go M_1\go M_5$. 
		
		Analogous considerations as in item 2(b)i yields the candidate platform illustrated in Fig.\ \ref{fig4}(left). 
		Now the calculation of the 
		M\"obius picture of this candidate with respect to the orange direction $o$ yields 
		$(0:0:D_{13}D_{45}:0:D_{35}D_{12}:0)$. Therefore we have $[\pi_o(\mathfrak{m})]_{\Gamma}\in L_{15}$, 
		which contradicts $[\pi_o(\mathfrak{M})]_{\Gamma}\in L_{24}$, thus we have no valid reconstruction. 
		\end{enumerate}
	\end{enumerate}
\item
$i=4$: W.l.o.g.\ we can set $j=2$ and $k=5$.

The discussion of cases is exactly the same as in item 2 if one  
exchanges the indices 4 and 5. The resulting reconstruction 3 as well as the 
corresponding non-valid candidate platform are illustrated in Fig.\ \ref{fig6}.
\end{enumerate}
\begin{figure}[h!]
\begin{center}  
 \begin{overpic}  
    [width=70mm]{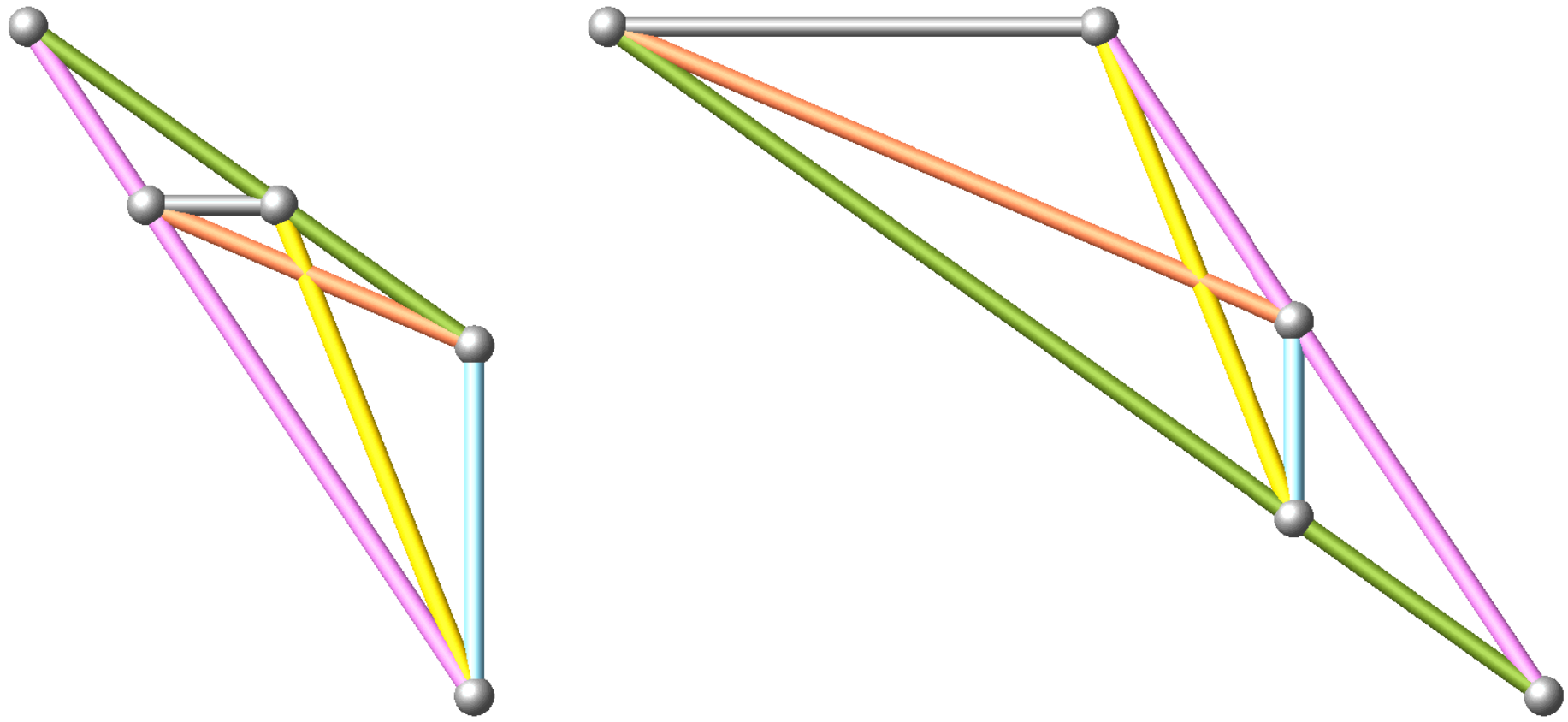}
	\put(2,31){$\go m_4$}
	\put(33,1){$\go m_1$}
	\put(33,23){$\go m_2$}
	\put(20,33){$\go m_5$}
	\put(-6,44){$\go m_3$}
	\put(101,1){$\go m_3$}	
	\put(75.5,10){$\go m_1$}	
	\put(85,25){$\go m_2$}	
	\put(30.5,44){$\go m_4$}
	\put(72,44){$\go m_5$}	
  \end{overpic} 
\end{center}
\caption{
The two possible reconstructions of the platform configuration from the M\"obius picture, under the additional 
assumption that $\go m_3$ is the intersection of lines $\go m_1\go m_4$ and $\go m_2\go m_5$. Here the directions of lines $\go m_1\go m_2$, 
$\go m_4\go m_5$, $\go m_1\go m_5$ and $\go m_2\go m_4$ are fixed to $b$, $m$, $y$, and $o$, respectively. The left configuration 
is not compatible. The right configuration leads to a Duporcq pentapod.
}
  \label{fig6} 
\end{figure}  
Moreover for the $i$th reconstruction ($i=1,2,3$) there exists an affine relation $\kappa_i$
between the set $\left\{\go M_1,\go M_2,\go M_4,\go M_5\right\}$ and the 
set  $\left\{\go m_1,\go m_2,\go m_4,\go m_5\right\}$. In detail these affine mappings $\kappa_i$
are given by:
\begin{align*}
&\kappa_{1}:&\, &\go M_1\mapsto \go m_1 &\,   &\go M_2\mapsto \go m_2 &\,  &\go M_4\mapsto \go m_4 &\,  &\go M_5\mapsto \go m_5 \\
&\kappa_{2}:&\, &\go M_1\mapsto \go m_4 &\,   &\go M_2\mapsto \go m_5 &\,  &\go M_4\mapsto \go m_1 &\,  &\go M_5\mapsto \go m_2 \\
&\kappa_{3}:&\, &\go M_1\mapsto \go m_5 &\,   &\go M_2\mapsto \go m_4 &\,  &\go M_4\mapsto \go m_2 &\,  &\go M_5\mapsto \go m_1 
\end{align*}
For all three cases the validity of these affine mappings can be proven by 
direct computation. Moreover it should be noted that $\kappa_1$ 
maps $\go M_3\mapsto \go m_3$ in addition. 
As a consequence the pentapod design resulting from reconstruction 1 is a planar affine pentapod 
belonging to item (b) of Theorem \ref{thm1a}, which was already discussed in \cite{asme_ns}. 
Therefore we remain only with reconstruction 2 and 3. 

\begin{rmk}\label{rmk:dup}
Note that the Duporcq pentapods fit with reconstruction 2 and 3 for the following reason: 
Assumed the base (cf.\ Fig.\ \ref{fig1}) is given, then two lines of the complete quadrilateral through the points 
$\go M_1,\go M_2,\go M_4,\go M_5$ are already determined by the collinearity of the triples 
$\go M_1,\go M_2,\go M_3$ and $\go M_3,\go M_4,\go M_5$, respectively. 
Therefore the quadrilateral is completed either by the lines 
$\go M_1\go M_5$ and $\go M_2\go M_4$, which corresponds with reconstruction 2, 
or by the lines $\go M_1\go M_4$ and $\go M_2\go M_5$, which corresponds with reconstruction 3.  
\hfill $\diamond$
\end{rmk}


\section{Bond Theory}\label{bonds}

In this section we shortly repeat two different approaches for defining so-called bonds. 
The first one discussed in Subsection \ref{studybonds} is based on the Study parametrization of 
$SE(3)$ in contrast to the one presented in Subsection \ref{confbonds}, which uses the so-called 
conformal embedding of $SE(3)$.  In Section \ref{vs} a 
relation between the bonds based on these different embeddings is given.

\subsection{Bonds based on the Study Embedding of SE(3)}\label{studybonds}

We denote the eight homogenous Study parameters by $(e_0:e_1:e_2:e_3:f_0:f_1:f_2:f_3)$, where 
the first four homogeneous coordinates $(e_0:e_1:e_2:e_3)$ are the so-called Euler parameters.
Now, all real points of the Study parameter space $\PP^7$, 
which are located on the so-called Study quadric $S:\,\sum_{i=0}^3e_if_i=0$, 
correspond to an Euclidean displacement, with exception of the 3-dimensional subspace $e_0=e_1=e_2=e_3=0$, 
as its points cannot fulfill the condition $N\neq 0$ with $N=e_0^2+e_1^2+e_2^2+e_3^2$. 
All points of the complex extension $\PP^7_{\CC}$ of $\PP^7$, which cannot fulfill this normalizing condition,  
are located on the so-called exceptional quadric $N=0$. 

By using the Study parametrization of Euclidean displacements, the condition that the point $\go m_i$ is located 
on a sphere centered in $\go M_i$ with radius $R_i$ is a quadratic homogeneous equation according to Husty \cite{husty}.  
For the explicit formula of this so-called sphere condition $Q_i$ we refer to \cite[Eq.\ (2)]{asme_ns}. 
Now the solution for the direct kinematics over $\CC$ of a pentapod can be written as the
algebraic variety $V$ of the ideal spanned by $S,Q_1,\ldots ,Q_5,N=1$. 
In the case of pentapods with mobility 2 the variety $V$ is $2$-dimensional.

We consider the algebraic motion of the pentapod, which is defined as the set of points on the
Study quadric determined by the constraints; i.e., the  common points of  
the six quadrics $S,Q_1,\ldots ,Q_5$. 
Now the points of the algebraic motion with $N\neq 0$ equal the kinematic image of the algebraic variety $V$. 
But we can also consider the set $\mathcal{B}$ of points of the algebraic motion, which belong to the exceptional quadric $N=0$. 
For an exact mathematical definition of these so-called bonds we refer to \cite[Definition 1]{asme_ns}. 
In the case of pentapods with mobility 2 the set $\mathcal{B}$ is of dimension $1$; i.e., a {\it bonding curve}.

We use the following approach for the computation of bonds: 
In a first step we project the algebraic motion of the pentapod 
into the Euler parameter space $\PP^3_{\CC}$ by the elimination of $f_0,\ldots ,f_3$. 
This projection is denoted by $\varsigma$. 
In a second step we determine the set $\mathcal{B}_{\varsigma}$ of projected bonds as 
those points of the projected point set $\varsigma(V)$, which are located on the quadric $N=0$; i.e., 
\begin{equation}\label{def:projbond}
\mathcal{B}_{\varsigma}:= ZarClo\left(\varsigma\left(V\right)\right) \cap \left\{
(e_0:\ldots :e_3)\in \PP^3_{\CC}\,\, | \,\,  N=0 
\right\}.
\end{equation}

\subsection{Bonds based on the Conformal Embedding of SE(3)}\label{confbonds}

As shown in \cite[Section 2.1]{gns1}, it is 
possible to construct a projective compactification $X$ in $\PP^{16}_{\CC}$ for 
the complexification $SE(3)_{\CC}$ of the group $SE(3)$ in a way that the sphere condition 
is linear in the coordinates of $\PP^{16}_{\CC}$. 
The map $SE(3) \hookrightarrow \PP^{16}_{\CC}$ is the so-called \emph{conformal 
embedding} of $SE(3)$ and $X$ is a projective variety of dimension $6$ and degree $40$. 

Now the five linear sphere conditions  
determine a linear subspace $F \subseteq \PP^{16}_{\CC}$ of codimension $5$.
The intersection $K = X \cap F$ is defined to be the \emph{complex 
configuration set} of the pentapod. 

It is also known that $X$ can be written as the 
disjoint union $SE(3)_{\CC} \cup B_X$, where the so-called boundary 
${B}_X$ is obtained as the intersection of $X$ and a hyperplane $H$. 
Moreover the boundary can be decomposed into the following $5$ subsets:
\begin{description}
\item[Vertex:] This is the only real point in ${B}_X$, a singular point with 
multiplicity~$20$; it is never contained in $K$.
\item[Collinearity points:] If $K$ contains such a point, then either the
platform points or the base points are collinear. 
\item[Similarity points:] If $K$ contains such a point, then there are normal 
projections of platform and base to a plane such that the images are similar.
\item[Inversion points:] If $K$ contains such a point, then there are normal
projections of platform and base to a plane such that the images are related
by an inversion.
\item[Butterfly points:] If $K$ contains such a point, then there are two lines,
one in the base and one in the platform, such that any leg has either its base 
point on the base line or its platform point on the platform line.
\end{description}

Now the set of bonds $\mathcal{B}_K$ is obtained as the intersection of 
$K$ and the  boundary ${B}_X$. 
Moreover it should be mentioned that the intersection multiplicity of 
$K$ and $H$ is at least $2$ in each bond. 
Note that for pentapods with mobility 2, the bondset $\mathcal{B}_K$ is 1-dimensional.

\subsection{Relation between Bonds based on different Embeddings}\label{vs}
 
If $\rho \colon SE(3) \longrightarrow SO(3)$ is the map sending a direct isometry 
to its rotational part, then there exists a linear projection $\xi\colon \PP^{16}_{\CC} \dashrightarrow 
\PP^{9}_{\CC}$ such that the following diagram is commutative (cf.\ \cite[Section 1]{liaison}):
\begin{equation}
\label{diagram:compactifications}
  \begin{array}{c}
  \xymatrix{ SE(3) \ar[dr]\ar[rr]\ar[dd]_{\rho}  & & X \subseteq\PP^{16}_{\CC} \ar@{-->}[dd]_{\xi} 
	 \\ 
	& S\subseteq\PP^7_{\CC} \ar@{-->}[d]_{\varsigma} & \\
SO(3) \ar[r] & \PP^3_{\CC} \ar[r]^-{v_{3,2}} & V_{3,2} \subseteq \PP^9_{\CC} } 
  \end{array}
\end{equation}
where $v_{3,2}$ is the Veronese embedding of~$\PP^{3}_{\CC}$ and 
$V_{3,2}$ is its image in~$\PP^9_{\CC}$. 
The center of $\xi$ is the linear space spanned by 
similarity points, which contains also the collinearity points and the vertex. 

\begin{lem}\label{lem:0}
For reconstruction 2 and 3 the complex configuration set $K$ does not contain collinearity bonds, 
but four butterfly bonds and one similarity bond.   
\end{lem}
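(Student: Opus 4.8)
The plan is to reduce the statement to pure geometry. I would first check that on the boundary hyperplane $H$ the five linear sphere conditions cutting out $F$ become independent of the radii, so that $\mathcal{B}_K=X\cap F\cap H$ is governed by the planar geometry of $\mathfrak{M}$ and $\mathfrak{m}$ alone. Then I would fix explicit coordinates for the base $\go M_1,\ldots,\go M_5$ of Fig.~\ref{fig1} and generate the platform of reconstruction 2 through the affine map $\kappa_2$; reconstruction 3 then follows by the $4\leftrightarrow5$ symmetry recorded in item 3 of Subsection~\ref{10pd}. With these coordinates the five strata of $B_X$ recalled in Subsection~\ref{confbonds} are tested one at a time.

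For the collinearity stratum the argument is immediate: such a bond would force all five base or all five platform points onto one line, whereas for reconstruction 2 the base splits into the two distinct lines $\go M_1\go M_2\go M_3$ and $\go M_3\go M_4\go M_5$ meeting only in $\go M_3$, and the platform into $\go m_1\go m_3\go m_5$ and $\go m_2\go m_3\go m_4$ meeting only in $\go m_3$. As neither five-tuple is collinear, $K$ contains no collinearity bond.

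For the butterfly stratum I would enumerate the admissible pairs $(\ell,\ell')$ of a base line $\ell$ and a platform line $\ell'$ for which every leg $i$ satisfies $\go M_i\in\ell$ or $\go m_i\in\ell'$. Using the collinearities of item~(e) together with $\kappa_2$ --- which sends the base line $\go M_1\go M_2$ onto $\go m_4\go m_5$ and $\go M_4\go M_5$ onto $\go m_1\go m_2$ --- a short case check leaves exactly the four pairs $(\go M_1\go M_2\go M_3,\go m_4\go m_5)$, $(\go M_3\go M_4\go M_5,\go m_1\go m_2)$, $(\go M_2\go M_4,\go m_1\go m_3\go m_5)$ and $(\go M_1\go M_5,\go m_2\go m_3\go m_4)$, each distributing the five indices as $3+2$. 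Since Subsection~\ref{confbonds} supplies only a \emph{necessary} condition, this produces at most four butterfly candidates; I defer the proof that all four genuinely occur to the computation below.

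For the two remaining strata I would exploit the commutative diagram~\Eqref{diagram:compactifications}: because the centre of $\xi$ is spanned by the similarity points and contains the collinearity points and the vertex, a conformal bond collapses under $\xi$ exactly when it is of similarity, collinearity or vertex type, and otherwise maps to a genuine point of $V_{3,2}$. I would therefore compute the projected bonds $\mathcal{B}_\varsigma\subseteq\PP^3_\CC$ on $N=0$ from the Study embedding of Subsection~\ref{studybonds} and transport them through the diagram, reading off how many bonds collapse and how many survive. As the vertex is never in $K$ and no collinearity bond exists, every collapsed bond is a similarity bond; I expect a single one, realised by the unique projection direction $c\in C$ that the coincidence of the two profiles (Subsection~\ref{10pd}) makes available. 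Matching the surviving bonds with the four butterfly candidates above then shows that all four occur and that the inversion stratum is empty. The main obstacle is precisely this bridge from the necessary geometric criteria of Subsection~\ref{confbonds} to the true existence and exact number of bonds, which forces the explicit coordinate computation of $X\cap F\cap H$ --- equivalently of $\mathcal{B}_\varsigma$ --- for reconstruction 2; reconstruction 3 is obtained verbatim under the $4\leftrightarrow5$ relabelling.
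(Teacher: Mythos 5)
Your handling of the collinearity and butterfly strata is correct and is exactly what the paper dismisses as ``trivial'': neither five-tuple is collinear, and the two collinear triples on each side admit precisely the four line pairs you list (your worry that existence still needs a separate computation is unnecessary, since in the framework of the conformal embedding the sphere condition at a butterfly point associated with $(\ell,\ell^{\prime})$ is satisfied exactly when each leg has its base point on $\ell$ or its platform point on $\ell^{\prime}$). The genuine content of Lemma \ref{lem:0}, however, is the existence of \emph{exactly one similarity bond}, and there your plan has a structural flaw. Similarity bonds lie in the \emph{center} of the projection $\xi$ (the paper states this right after diagram \Eqref{diagram:compactifications}); points in the center of a linear projection have no image at all, so similarity bonds leave no trace in the projected bond set $\mathcal{B}_{\varsigma}\subseteq\PP^3_{\CC}$. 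You therefore cannot ``read off how many bonds collapse'' by computing $\mathcal{B}_{\varsigma}$ and transporting it through the diagram: the projection destroys precisely the information you need, and counting similarity bonds would force you back into an explicit computation of $K\cap H$ in $\PP^{16}_{\CC}$, which your proposal never sets up. The paper avoids this entirely with a direct affine-geometric argument: affine ratios of collinear triples are invariant under parallel projection, so one constructs $\go m_3^{\prime}$ on $\go m_1\go m_2$ and $\go m_3^{\prime\prime}$ on $\go m_4\go m_5$ reproducing the ratios of $\go M_1,\go M_2,\go M_3$ and $\go M_3,\go M_4,\go M_5$; a computation shows that $\go m_3,\go m_3^{\prime},\go m_3^{\prime\prime}$ lie on a line $\go g$, whose direction (together with the analogously constructed line $\go G$ in the base, parallel to $\go g$ by Assumption \ref{ass:1}) is the projection direction along which platform and base project to similar one-dimensional configurations. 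Existence and uniqueness of the similarity bond are read off from this construction; nothing of this kind appears in your proposal, which at the crucial point only says ``I expect a single one.''

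A second, decisive error is your anticipated conclusion that ``the inversion stratum is empty.'' For the Duporcq pentapods --- which are instances of reconstructions 2 and 3 and have mobility 2 --- the bond set $\mathcal{B}_K$ is one-dimensional, while Lemma \ref{lem:0} supplies only finitely many collinearity, butterfly and similarity bonds; hence the bonding curve must consist generically of inversion bonds, i.e.\ there are infinitely many of them. This is not a side issue: the purpose of Lemma \ref{lem:0} in the paper's architecture is exactly to force this conclusion, since ``infinitely many inversion bonds'' is the hypothesis under which Lemma \ref{lem:1} and Corollary \ref{cor:1} are applied in Section \ref{comp}. So the final ``matching'' step of your plan cannot close as described; carried out honestly, the computation would refute your expectation rather than complete the bond count.
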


\begin{proof}
The numbers of collinearity and butterfly bonds are trivial. The reasoning for the existence of 
exactly one similarity bond is as follows (cf.\ Fig.\ \ref{figsim}): 

As $\go M_1,\go M_2,\go M_3$ are collinear the ratio $TV(\go M_1,\go M_2,\go M_3)$ remains 
constant under parallel projections (with projection directions not parallel to the carrier line of the collinear points). 
Therefore one can construct the point $\go m_3^{\prime}$ on 
the line $\go m_1\go m_2$ such that $TV(\go m_1,\go m_2,\go m_3^{\prime})=TV(\go M_1,\go M_2,\go M_3)$ holds.
In the same way one can construct the point $\go m_3^{\prime\prime}$ on 
the line $\go m_4\go m_5$ such that 
$TV(\go m_3^{\prime\prime},\go m_4,\go m_5)=TV(\go M_3,\go M_4,\go M_5)$ holds. 
It can be checked by direct computations that $\go m_3,\go m_3^{\prime},\go m_3^{\prime\prime}$ 
are located on a line $\go g$,
which gives the direction of the projection direction of the platform. 

The reverse construction from the platform to the base yields the points 
$\go M_3,\go M_3^{\prime},\go M_3^{\prime\prime}$ located on a line $\go G$,  
which gives the direction of the projection of the base. 

Moreover $\go g$ and $\go G$ have to be parallel due to Assumption \ref{ass:1}, 
which can also be checked by straightforward computations. \hfill $\BewEnde$
\end{proof}

\begin{figure}[h!]
$\phm$\hfill
 \begin{overpic}  
    [width=80mm]{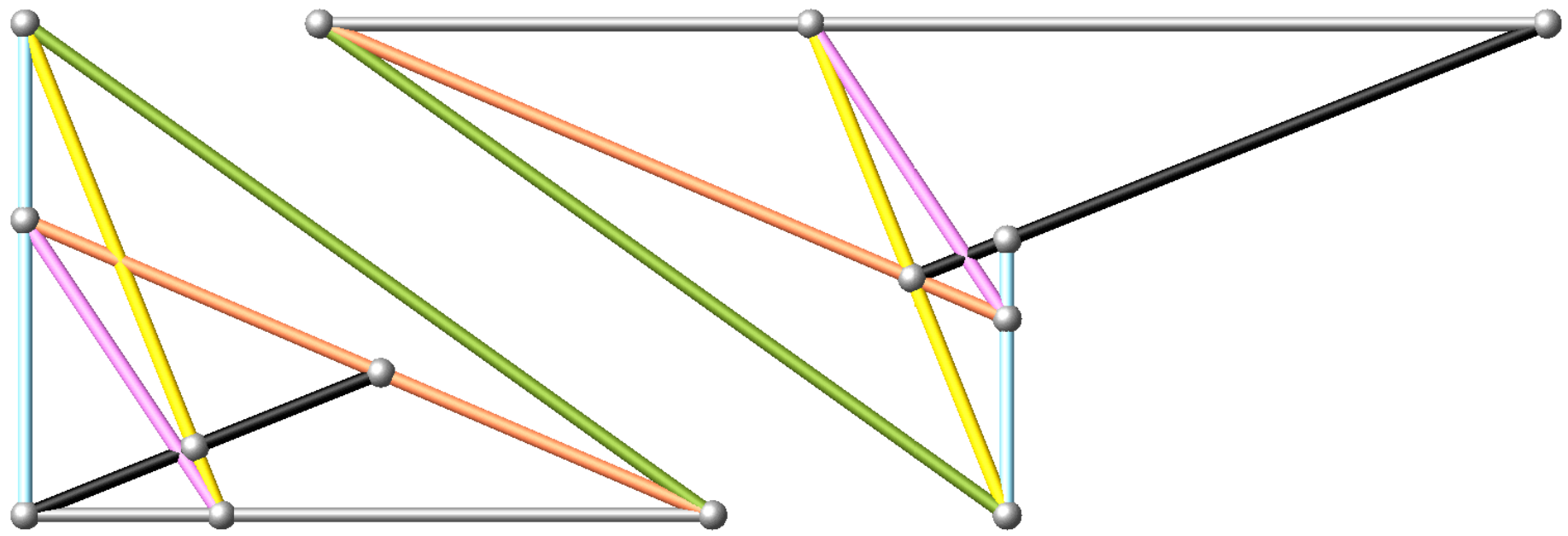}
		\begin{footnotesize}
	\put(-4.5,1){$\go M_3$}
	\put(15.5,2.7){$\go M_1$}
	\put(47,1){$\go M_2$}
	\put(-4.5,18){$\go M_4$}
	\put(-4.5,30){$\go M_5$}
	\put(11.7,9){$\go M_3^{\prime}$}
	\put(21.5,5.5){$\go M_3^{\prime\prime}$}		
	\put(5,4.5){$\go G$}
	\put(66,1){$\go m_2$}	
	\put(66,12){$\go m_1$}	
	\put(53,13.5){$\go m_3$}	
	\put(62,21.5){$\go m_3^{\prime}$}	
	\put(95,28){$\go m_3^{\prime\prime}$}	
	\put(16,29.5){$\go m_5$}
	\put(47,29.5){$\go m_4$}
	\put(77,26.5){$\go g$}
	\end{footnotesize}
	\end{overpic}
\caption{
Projection along the black direction leads to one-dimensional configurations that are similar. This shows that 
the pentapod with such base/platform configuration has a similarity bond.
}
  \label{figsim} 
\end{figure}  

\begin{cor} \label{cor:0} 
If reconstruction 2 or 3 has mobility 2, then $K_e$ has to be a surface. 
\end{cor}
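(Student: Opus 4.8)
The plan is to recast the statement as a dimension count for the linear projection $\xi$ restricted to $K$, and then to exclude the single degeneration that could lower the dimension, feeding in the bond data of Lemma~\ref{lem:0}. Since the pentapod has mobility~$2$, the complex configuration set $K=X\cap F$ is $2$-dimensional, and by the commutative diagram~\Eqref{diagram:compactifications} the set $K_e$ is the image $\xi(K)\subseteq V_{3,2}$ recording the rotational part of each configuration. As $\xi|_K$ is dominant onto $K_e$ we have $\dim K_e\le 2$, so it suffices to rule out $\dim K_e\le 1$.

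First I would record two structural facts about $\xi$. Its centre $Z$ is the linear span of the similarity points; since these, together with the collinearity points and the vertex, lie on $B_X=X\cap H$, we have $Z\subseteq H$, and a short incidence count gives $\xi(H)=\PP^8$ together with $\PP^7_q\cap H=Z$ for every fibre $\PP^7_q=\langle Z,\tilde q\rangle$ with $q\notin\PP^8$. Next, Lemma~\ref{lem:0} pins down $K\cap Z$: because $K$ has no collinearity bonds and the vertex never lies in $K$, the only bond of $K$ contained in the centre is the unique similarity bond $s$, so $K\cap Z=\{s\}$.

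Now suppose, for contradiction, that $\dim K_e\le 1$. Then the generic fibre of $\xi|_K$ is positive-dimensional, i.e.\ $K$ is ruled by curves $\Phi_q=K\cap\PP^7_q$. Choosing $q$ generic (hence $q\notin\PP^8$, which is possible since $K\not\subseteq H$ together with $\xi^{-1}(\PP^8)=H$ forces $K_e\not\subseteq\PP^8$), the curve $\Phi_q$ lies in the $7$-plane $\PP^7_q$ and must meet its hyperplane $Z=\PP^7_q\cap H$; thus $\Phi_q\cap H\subseteq K\cap Z=\{s\}$. Hence every generic ruling curve passes through the similarity bond $s$ and meets the boundary nowhere else. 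The contradiction is then meant to come from the four butterfly bonds of Lemma~\ref{lem:0}: they lie in $K\cap H$ but not in the centre $Z$, so each lies on a fibre of $\xi$ over a point of $K_e$; a generic such fibre meets $H$ only in $s$, which would force each butterfly bond to coincide with $s$ --- impossible, since butterfly and similarity bonds are of different type.

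The step I expect to be the genuine obstacle is this last one, namely excluding the \emph{conical} degeneration in which the ruling $7$-planes that happen to lie inside $H$ absorb the butterfly bonds. Such special fibres occur precisely over the finite set $K_e\cap\PP^8$, and a priori they could contribute a $1$-dimensional part of the bonding curve $\mathcal B_K$ carrying the four butterfly bonds, so the dimension bookkeeping above does not by itself close the argument. Ruling this out rigorously requires the explicit geometry of reconstructions~$2$ and~$3$ (Figs.~\ref{fig4} and~\ref{fig6}): one must check by direct computation that the four butterfly bonds lie on four \emph{distinct} fibres of $\xi$ and hence cannot all be contracted, or equivalently that $K$ is not a cone ruled by the fibres through $s$ (kinematically, that the mobility-$2$ motion has no translational redundancy). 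With that computation in hand, $\dim K_e\le 1$ becomes untenable and $K_e$ is forced to be a surface.
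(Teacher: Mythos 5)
Your reduction to ruling out $\dim K_e\le 1$ is set up correctly, and the incidence framework is sound as far as it goes: the fibres $\PP^7_q$ of $\xi$ contain the centre $Z$ as a hyperplane, $\PP^7_q\cap H=Z$ whenever $q\notin\xi(H)$, and hence every \emph{generic} fibre curve of $\xi|_K$ meets the bond set only in the similarity bond $s$. But the argument does not close, and the failure is exactly where you flagged it. The fibres over the finite set $K_e\cap\xi(H)$ are entirely contained in $H$, so their intersections with $K$ are curves lying inside $\mathcal B_K$, and nothing in your bookkeeping prevents the four butterfly bonds from sitting on such curves. Your proposed repair --- verifying that the four butterfly bonds lie on four \emph{distinct} fibres --- does not yield a contradiction: if $K_e$ is a curve of degree $\ge 4$ not contained in $\xi(H)$, then $K_e\cap\xi(H)$ can consist of four or more distinct points, and placing one butterfly bond on each corresponding special fibre is perfectly consistent with $\dim K_e=1$; the special fibres are then $1$-dimensional components of $\mathcal B_K$, which is allowed because for mobility $2$ the bond set is $1$-dimensional and Lemma~\ref{lem:0} puts no bound on the number of \emph{inversion} bonds that could make up those curves. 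There is also a secondary gap: your claim $K\cap Z=\{s\}$ silently assumes that butterfly and inversion bonds never lie in the centre $Z$; the paper only states that $Z$ (the span of \emph{all} similarity points of $B_X$) contains the collinearity points and the vertex, so this membership statement needs justification from \cite{liaison}.

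The input needed to exclude your ``conical degeneration'' is kinematic, and it is what the paper's own proof uses instead of incidence geometry. A fibre of $K\to K_e$ over a point is a self-motion with fixed orientation, i.e.\ a pure translational self-motion, and by the characterization recalled from \cite{nawratil_bond}, every such motion forces the existence of a similarity bond that is determined by, and varies with, the platform orientation. Hence if $K_e$ were a curve, the $1$-parameter family of orientations would produce a $1$-dimensional family of similarity bonds, contradicting the \emph{single} similarity bond of Lemma~\ref{lem:0}; and if $K_e$ were a point, the whole self-motion would be a $2$-dimensional translation, which is known to force platform and base to be directly congruent --- impossible for reconstructions $2$ and $3$, whose collinear triples carry different index patterns in platform and base. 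In other words, the paper rules out the degenerate ruling not by separating the butterfly bonds, but by showing that each fibre needs its \emph{own} similarity bond, so the fibres cannot all pass through the one point $s$; this is precisely the statement that pure incidence counting cannot reach, and without it your proof remains incomplete.
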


\begin{proof}
First of all we want to recall the known characterization for a pure translational self-motion 
(according to \cite[Theorem 2]{nawratil_bond} under consideration of 
\cite[Footnote 4]{asme_ns}): 
A pentapod possesses a pure translational self-motion, 
if and only if the platform can be rotated about the center $\go m_1=\go M_1$ into a pose, where the vectors 
$\overrightarrow{\go M_i\go m_i}$ for $i=2,\ldots ,5$ 
fulfill the condition  $rk(\overrightarrow{\go M_2\go m_2},\ldots ,\overrightarrow{\go M_5\go m_5})\leq 1$. Note 
that this implies the existence of a similarity bond. 
Moreover all $1$-dimensional self-motions are circular translations in planes orthogonal to the parallel vectors 
$\overrightarrow{\go M_i\go m_i}$ for $i=2,\ldots ,5$.

If $K_e$ is a point, then the orientation during the self-motion is fixed and we can only obtain a 2-dimensional 
translational self-motion. It is well-known \cite{nawratil_bond} that in this case the platform and the base have to be directly congruent. 

If $K_e$ is a curve, then for each corresponding platform orientation a $1$-dimensional translational sub-self-motion 
has to exist. This implies a $1$-dimensional set of similarity bonds, which contradicts Lemma \ref{lem:0}.  \hfill $\BewEnde$
\end{proof}

\begin{lem} \label{lem:1}
Assume that $K$ is a surface. Assume that the projection $v_{3,2}^{-1}\circ\xi$: 
$K\mapsto K_e$ is birational.
Assume that the pod has infinitely many inversion bonds.
Then the intersection of $K_e$ and the exceptional quadric $N=0$
has a curve of multiplicity bigger than 1.
\end{lem}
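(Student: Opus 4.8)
The plan is to view the inversion bonds as a curve on the surface $K\subseteq\PP^{16}_{\CC}$, to push this curve forward to $K_e\subseteq\PP^3_{\CC}$ by the birational projection $\pi:=v_{3,2}^{-1}\circ\xi$, and to convert the known intersection multiplicity of $K$ with the boundary hyperplane $H$ into the intersection multiplicity of $K_e$ with the exceptional quadric $N=0$. First I would organize the inversion bonds into a curve: since for a mobility-$2$ pentapod the bondset $\mathcal{B}_K=K\cap B_X=K\cap H$ is $1$-dimensional and we assume infinitely many inversion bonds, these fill out a $1$-dimensional component $\mathcal{C}\subseteq\mathcal{B}_K$. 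The centre of $\xi$ is spanned by the similarity points and contains the collinearity points and the vertex; by the description of the boundary, inversion points lie off this centre. Hence — after blowing up the finite indeterminacy locus of $\pi$, which by Lemma \ref{lem:0} meets $K$ only in the single similarity bond — $\pi$ becomes a birational morphism defined along $\mathcal{C}$, and $\mathcal{C}$ lifts isomorphically. Because $K$ is a surface and $\pi$ is birational, $K_e$ is a surface and $\mathcal{C}$ is not contracted, so $\mathcal{C}_e:=\pi(\mathcal{C})$ is a curve on $K_e$.

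Second I would place $\mathcal{C}_e$ on the exceptional quadric and identify the relevant divisor pullback. Under the Veronese embedding the quadric $N=0$ is the pullback of a hyperplane $H^{\prime}\subseteq\PP^9_{\CC}$, so $v_{3,2}(\{N=0\})=V_{3,2}\cap H^{\prime}$; by the commutativity of diagram \Eqref{diagram:compactifications} the linear projection $\xi$ pulls $H^{\prime}$ back to the boundary hyperplane $H$, i.e.\ the $\xi$-preimage of $V_{3,2}\cap H^{\prime}$ restricts on $X$ to $B_X=X\cap H$. Consequently $\pi^{*}\{N=0\}=K\cap H=\mathcal{B}_K$ as divisors on $K$ (away from the centre), and in particular $\mathcal{C}_e\subseteq K_e\cap\{N=0\}$.

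Third I would transfer the multiplicity by the projection formula. The quoted fact that the intersection multiplicity of $K$ and $H$ is at least $2$ in each bond means precisely that $\mathcal{C}$ occurs in the hyperplane-section cycle $K\cdot H$ with coefficient $\geq 2$. Since $\pi$ is birational we have $\pi_{*}[K]=[K_e]$, and combined with $\pi^{*}\{N=0\}=K\cap H$ the projection formula yields $K_e\cdot\{N=0\}=\pi_{*}\!\left(K\cdot H\right)$. As $\mathcal{C}$ is not contracted, its image $\mathcal{C}_e$ inherits a coefficient $\geq 2$ in $K_e\cdot\{N=0\}$. Thus $K_e\cap\{N=0\}$ contains the curve $\mathcal{C}_e$ with multiplicity bigger than $1$, as claimed.

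The step I expect to be the main obstacle is the second one: rigorously matching the boundary hyperplane $H$ with the $\xi$-preimage of the linear form cutting $v_{3,2}(\{N=0\})$, so that $\pi^{*}\{N=0\}$ really equals the bond divisor $K\cap H$ with its multiplicities, and controlling the indeterminacy of $\pi$ at the similarity bond so that the projection formula applies without spurious contributions along the centre of $\xi$. Restricting everything to the open locus where $\pi$ is an isomorphism, on which the inversion curve $\mathcal{C}$ entirely lives, is what makes the coefficient $\geq 2$ survive the pushforward.
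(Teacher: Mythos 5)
Your proposal is correct and takes essentially the same route as the paper's own proof: both treat the infinitely many inversion bonds as a curve in $K\cap H$, use birationality of $v_{3,2}^{-1}\circ\xi$ to get a local isomorphism along that curve, identify the image of the boundary hyperplane $H$ with the exceptional quadric $N=0$, and thereby transfer the multiplicity-$\geq 2$ intersection at the inversion bonds down to a curve of multiplicity bigger than $1$ in $K_e\cap\{N=0\}$. The only difference is presentational --- you package the transfer via divisor pullback and the projection formula, while the paper argues the tangency locally at almost all inversion points and takes the Zariski closure of their images --- and the points you flag as delicate (the inversion curve avoiding the centre of $\xi$ and not being contracted) are glossed at exactly the same level in the paper's proof.
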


\begin{proof}
The mobility surface $K$ has a tangential intersection with the boundary hyperplane $H$
at all inversion bonds. Since the projection $K\to K_e$ is birational, the projection 
$\xi:X\subset\PP^{16}\dashrightarrow\PP^9$ is locally an isomorphism for almost all inversion points.
Hence the image of $K$ intersects the image of the hyperplane $H$ tangentially at almost all
images of inversion points. This is equivalent to saying that $K_e$ intersects the exceptional quadric $N=0$ 
at almost all images of inversion points. The closure of these points would be
the curve with intersection multiplicity bigger than 1. \hfill $\BewEnde$
\end{proof}

\begin{cor} \label{cor:1}
If in addition to the above assumptions $K_e$ is a quadric surface, then the intersection
is totally tangential along an irreducible quadric.
\end{cor}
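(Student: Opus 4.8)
The plan is to analyse the base locus of the pencil of quadrics $\langle N,K_e\rangle$ in $\PP^3_{\CC}$. Since $N=e_0^2+e_1^2+e_2^2+e_3^2$ is nondegenerate, the exceptional quadric $Q_N:=\{N=0\}$ is a smooth quadric surface, isomorphic to $\PP^1_{\CC}\times\PP^1_{\CC}$, on which the class of a plane section is $(1,1)$. As $K_e$ is a quadric distinct from $Q_N$ (otherwise the entire image $K_e$ would lie on the exceptional quadric), the two surfaces share no two-dimensional component, so the scheme-theoretic intersection $\Delta:=K_e\cap Q_N$ is the restriction of $K_e$ to $Q_N$, a divisor of class $(2,2)$; thus $\Delta$ is a curve of degree $4$ and arithmetic genus $1$. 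By the proof of Lemma \ref{lem:1}, under the standing hypotheses the reduced tangency curve $T$ along which $K_e$ and $Q_N$ touch is one-dimensional and equals the closure of the images of the infinitely many inversion bonds; in particular $\Delta$ is non-reduced exactly along $T$.

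First I would translate tangency into the eigenvector picture of the pencil: at a base point $p\in\Delta$ the two quadrics are tangent iff $p$ is a singular point of some member $\lambda N+\mu K_e$, i.e.\ $p\in\ker(\lambda A+\mu B)$ for the Gram matrices $A,B$ of $N,K_e$. A one-dimensional tangency locus therefore forces some member to have rank at most $2$, and the totally tangential case we are after is precisely the one in which some member has rank $1$, i.e.\ the pencil contains a double plane $\Pi^2$; then $\Delta=Q_N\cap\Pi^2=2C$ with $C:=Q_N\cap\Pi$ an (a priori possibly reducible) conic. To establish this I would use the involution $\sigma$ of $Q_N$ interchanging its two rulings, which under the quaternionic identification of the Euler-parameter space is quaternion conjugation and realises the orientation reversal characterising inversion points. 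Since $T$ is the closure of inversion-bond images it is $\sigma$-invariant; as each component of $T$ occurs in $\Delta$ with multiplicity at least $2$ one has $2\,[T]\le(2,2)$, and the only nonzero $\sigma$-symmetric class satisfying this is $(1,1)$. In particular $T$ cannot be a single ruling line, and $2\,[T]=(2,2)=[\Delta]$ forces $\Delta=2T$: the intersection is totally tangential along $T=C$, and the corresponding member $\Pi^2$ has rank $1$.

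It remains to prove that $C$ is irreducible, which on the smooth quadric $Q_N$ is equivalent to the plane $\Pi$ not being tangent to $Q_N$ (a non-tangent plane cuts $Q_N$ in a smooth conic, a tangent one in the two rulings through the point of contact). Here I would argue that a tangent $\Pi$ would split $T$ as $\ell\cup\sigma(\ell)$ into two rulings meeting in the single point $p=\Pi\cap Q_N$, and that this forced coincidence point would manifest as an extra special bond incompatible with the bond count of Lemma \ref{lem:0} (exactly one similarity bond and no collinearity bond); equivalently, the inversion bonds form a single irreducible one-parameter family, so their closure $C$ is irreducible. Either way $C$ is an irreducible conic, the asserted irreducible quadric. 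The main obstacle is exactly this last step together with the identification of the inversion involution with the ruling swap $\sigma$: once that symmetry is in hand, the bidegree computation on $\PP^1_{\CC}\times\PP^1_{\CC}$ and the rank classification of pencils of quadrics are routine, but pinning down the irreducibility of $C$ (equivalently, that $\Pi$ is not tangent to $Q_N$) is where the specific geometry of the inversion bonds must be used.
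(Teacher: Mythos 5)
Your skeleton is essentially the paper's: bound the multiply-counted part $T$ of $K_e\cap\{N=0\}$ by degree, characterize total tangency by a rank-one (double-plane) member of the pencil, and then exclude single lines and pairs of intersecting lines. (Your bidegree bound $2[T]\le(2,2)$ on $Q_N\cong\PP^1_{\CC}\times\PP^1_{\CC}$ even disposes of a pair of skew lines automatically, a case where the paper has to invoke Hartshorne's connectedness theorem.) But both of your exclusion steps rest on claims you do not prove, and that is where the proof actually lives. First, the $\sigma$-invariance of $T$: quaternion conjugation does swap the two rulings of $N=0$, but kinematically it corresponds to passing to the inverse displacement, which maps the configuration set of the given pentapod to that of the pentapod with platform and base \emph{exchanged}. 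Hence there is no a priori reason why the closure of the images of \emph{this} pod's inversion bonds should be $\sigma$-invariant; such a symmetry would essentially presuppose that platform and base are congruent, which is part of what the whole argument is supposed to establish. Second, your irreducibility step is, as you concede, speculation: Lemma \ref{lem:0} counts bonds of $K$ inside $\PP^{16}_{\CC}$, and you give no mechanism by which the node of a reducible tangency conic $\ell\cup\sigma(\ell)$ down in $\PP^3_{\CC}$ would force an ``extra'' bond of one of the types counted there.

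The idea you are missing --- and which settles both exclusions at once --- is a reality argument. Since $N$ and $K_e$ are real, the non-reduced part $D$ of their intersection is stable under complex conjugation (note this is the anti-holomorphic involution, not your holomorphic $\sigma$), while the exceptional quadric $e_0^2+e_1^2+e_2^2+e_3^2=0$ contains no real points whatsoever. If $D$ were a single line, it would be conjugation-invariant, hence defined over $\RR$, hence would contain real points of $N=0$: contradiction. If $D$ were two intersecting lines, then either each line is defined over $\RR$ (same contradiction), or conjugation swaps them and their common point is conjugation-fixed, i.e.\ real: contradiction again. So $D$ is an irreducible conic, $2D$ exhausts the degree-$4$ intersection, and the tangency is total along $D$. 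This is exactly how the paper argues (using Hartshorne's theorem only for the skew-line case your bidegree bound already kills); it requires nothing about inversion bonds beyond the existence statement of Lemma \ref{lem:1}, and in particular no identification of the inversion involution with the ruling swap.
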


\begin{proof}
By degree, the part $D$ of the intersection which has multiplicity bigger than one can only be a line
or a conic. Since $D$ is defined over $\RR$, and any line contains real points, 
and the exceptional quadric does not have real points, $D$ is not a line. If $D$ were reducible,
then it is the union of two lines. If the two lines intersect, then the intersection point
would be real, but the exceptional quadric $N=0$ contains no real points; a contradiction. If the two lines
do not intersect, then we have a complete intersection of dimension~1 which is disconnected,
and this is also in contradiction to a well-known theorem in algebraic geometry
(see \cite{Hartshorne:62}). Therefore we remain with the case stated in Corollary \ref{cor:1}. 
\hfill $\BewEnde$
\end{proof}


\section{Computations in Study and Euler parameter space}\label{comp}

Within this section we prove computationally that reconstruction 2 and 3 can 
only have a 2-dimensional self-motion in the case already known to Duporcq. 

\subsection{Computation of $K_e$}

We parametrize the base as follows:
\begin{equation}
\begin{split}
\Vkt M_1&:=(0,0,0)^T,\quad \Vkt M_2:=(1,0,0)^T, \\
\Vkt M_4&:=(A_4,B_4,0)^T,\quad \Vkt M_5:=(A_5,B_5,0)^T.
\end{split}
\end{equation}
Then $\go M_3$ is already determined as the intersection point of 
$\go M_1\go M_2$ and $\go M_4\go M_5$, thus we get:
\begin{equation}
\Vkt M_3:=\left(\frac{B_4A_5-A_4B_5}{B_4-B_5},0,0\right)^T.
\end{equation}
As $\go M_3$ has to be a finite point which is not allowed to collapse with one of the other four given points we have
$B_4B_5U_1\neq 0$ with:
\begin{equation}
U_1:=(B_4-B_5)(B_4A_5-A_4B_5)(B_4A_5-A_4B_5-B_4+B_5).
\end{equation}
Now we compute the platform with the help of the mapping $\kappa_i$. In the remainder of this section 
we restrict to the case $i=2$ as the case $i=3$ can be done in a total analogous way. 
W.l.o.g.\ we can assume that the matrix $\Vkt A$ of $\kappa_2$ has the form:
\begin{equation}\label{matA}
\Vkt A:=\begin{pmatrix}
\mu_1 & \mu_2 \\
0 & \mu_3
\end{pmatrix} \quad \text{with} \quad \mu_1\mu_3\neq 0\quad \text{and}\quad \mu_1>0.
\end{equation} 
Therefore we get $\Vkt m_j=\Vkt A\Vkt M_j$ for $j=1,2,4,5$ and 
obtain $\go m_3$ as the intersection point of $\go m_2\go m_4$ and $\go m_1\go m_5$, which yields: 
\begin{equation}
\Vkt m_3:=\left(
\frac{B_4(A_5\mu_1+B_5\mu_2)}{B_4A_5+B_5-A_4B_5}, 
\frac{B_4B_5\mu_3}{B_4A_5+B_5-A_4B_5},0\right)^T.
\end{equation}
As this point also has to be a finite point we get additionally the assumption $U_2\neq 0$ with
\begin{equation}
U_2:=B_4A_5+B_5-A_4B_5.
\end{equation}
Moreover $\go M_3$ cannot be located on $\go M_4\go M_5$ which yields $U_3\neq 0$ with
\begin{equation}
U_3:=B_4A_5-B_4-A_4B_5.
\end{equation}
Let us denote the numerator of the difference $Q_1-Q_i$ ($i=2,\ldots ,5)$ of sphere conditions by 
$\Delta_i$. Then we can compute the linear combination 
\begin{equation}
B_4B_5U_1\Delta_2+U_3\Delta_3+B_5U_1U_2\Delta_4-B_4U_1U_2\Delta_5
\end{equation}
which yields a quadratic expression $K_e[1356]$ in the Euler parameters 
(free of Study parameters $f_0,\ldots ,f_3$), where the number in the bracket gives the number of terms. 

\begin{rmk}
$K_e$ cannot be fulfilled identically for the following reason: In this case the sphere conditions $Q_1, \ldots, Q_5$ 
are linearly dependent and therefore we would end up with an degenerated architectural singular manipulator  \cite{karger_1998}, 
which has to have 4 collinear points in the platform or the base (see also \cite{karger_nonplanar})
contradicting the design under consideration. \hfill $\diamond$
\end{rmk} 

\subsection{Determining the pentapod's geometry}

In the following we show that the projection $K\to K_e$ cannot be birational. This is done by contradiction; i.e.\ we assume 
that $K\to K_e$ is birational, and show that $K_e$ cannot intersect $N=0$  totally tangential along an 
irreducible quadric (cf.\ Corollary \ref{cor:1}). 

If $K_e$ touches $N=0$ along a quadric then there has to exist a double-counted plane $\varepsilon$: 
$\nu_0e_0+\nu_1e_1+\nu_2e_2+\nu_3e_3=0$ within the pencil of quadrics 
spanned by $K_e=0$ and $N=0$. Thus we can make the following ansatz $W=0$ with:
\begin{equation}
W:=K_e+\nu N + (\nu_0e_0+\nu_1e_1+\nu_2e_2+\nu_3e_3)^2.
\end{equation}
In the following we denote the coefficient of $e_0^{i}e_1^{j}e_2^{k}e_3^{l}$ of $W$ by 
$W_{ijkl}$. We consider:
\begin{equation}\label{step1}
\begin{split}
W_{1100}&=2\nu_0\nu_1,\quad W_{1010}=2\nu_0\nu_2, \\
W_{0101}&=2\nu_1\nu_3,\quad W_{0011}=2\nu_2\nu_3. 
\end{split}
\end{equation}
which implies the following two cases:
\begin{enumerate}[$\bullet$]
\item
$\nu_0=\nu_3=0$: We compute 
\begin{equation}
W_{0200}-W_{2000}=\nu_1^2, \quad 
W_{0020}-W_{0002}=\nu_2^2, 
\end{equation}
which implies that all $\nu_0,\ldots ,\nu_3$ are equal to zero, a contradiction. 
\item
$\nu_1=\nu_2=0$: Now we compute 
\begin{equation}
W_{2000}-W_{0200}=\nu_0^2, \quad 
W_{0002}-W_{0020}=\nu_3^2, 
\end{equation}
yielding the same contradiction. 
\end{enumerate}
This shows that the projection  $K\to K_e$ cannot be birational, which is equivalent with 
the condition that the system of equations $S=\Delta_2=\Delta_3=\Delta_4=\Delta_5=0$ linear 
in $f_0,\ldots ,f_3$  is linear dependent (rank of the coefficient matrix is less than 4). 
By means of linear algebra it can easily be verified that this can only be the case if 
$T=0$ holds with 
\begin{equation}
T:=\epsilon_{01}e_0e_1+\epsilon_{02}e_0e_2+\epsilon_{13}e_1e_3+\epsilon_{23}e_2e_3
\end{equation}
and
\begin{align}
\epsilon_{01}&:=\mu_3(1+\mu_1)B, &\quad
\epsilon_{02}&:=\mu_1A(\mu_3+1)-\mu_2B, \\
\epsilon_{23}&:=\mu_3(1-\mu_1)B,  &\quad
\epsilon_{13}&:=\mu_1A(\mu_3-1)+\mu_2B, 
\end{align}
by using the following abbreviations:
\begin{equation}
A:=A_5-A_4+1, \quad B:=B_4-B_5.
\end{equation}

Note that $T=0$ is also a quadric in the Euler parameter space. In the general case $T=0$ and $K_e=0$ intersect along a curve, 
but due to Corollary \ref{cor:0} they have to possess at least a common 2-dimensional component (i.e.\ a plane) or they are even 
identical.

Necessary conditions for this circumstance are obtained by determining the intersection of $K_e=T=N=0$ by resultant method;  
in detail this works as follows: We compute the resultant $R_{K_e}$ of $T$ and $N$ with respect to $e_0$. In the same way we compute 
the resultants $R_{T}$ and $R_{N}$. Then we calculate the resultant $R_{K_eT}$ of $R_{K_e}$ and $R_T$ with respect to $e_3$. 
Analogously we obtain  $R_{K_eN}$ and  $R_{TN}$. Finally we compute the greatest common divisor of $R_{K_eT}$, $R_{K_eN}$ and  $R_{TN}$, 
which can only vanish for $B_4B_5U_1U_2\mu_1\mu_3F_1F_2=0$ with 
\begin{equation}
\begin{split}
F_1:=&[B^2\mu_2-AB(\mu_1+\mu_3)](e_2^2-e_1^2) + \\
&[2A^2\mu_1-2B^2\mu_3-2AB\mu_2]e_1e_2
\end{split}
\end{equation}
and
\begin{equation}
F_2:=(1+\mu_1)(\mu_3-1)e_1^2+(1+\mu_3)(\mu_1-1)e_2^2-2\mu_2e_1e_2.
\end{equation}
In order that the three quadrics $N=T=K_e=0$ have a curve (projected bonding curve) 
in common either $F_1$ or $F_2$ has to be fulfilled identically. 
\begin{enumerate}[{{ad}}]
\item
$F_1$: We solve the coefficient of $e_2^2$ of $F_1$ for $\mu_2$ and plug the obtained expression in the coefficient of $e_1e_2$ of $F_1$. 
The resulting expression has only the real solution $A=B=0$, a contradiction. 
\item
$F_2$: It can easily be seen that $F_2$ is fulfilled identically if and only if $\mu_1=\mu_3=1$ and $\mu_2=0$ holds (due to our 
assumptions with respect to $\Vkt A$ of Eq.\ (\ref{matA})). 
\end{enumerate} 

Therefore $\kappa_2$ has to be the identity, which already implies the geometric properties of the Duporcq pentapods. 
We only remain to show that these pentapods possess 2-dimensional self-motions, 
which are line-symmetric in addition. 

\subsection{Determining the pentapod's self-motion}

Plugging $\mu_1=\mu_3=1$ and $\mu_2=0$ in $T=0$ yields:
$-2e_0(Be_1+Ae_2)=0$. Therefore we have to distinguish two cases:
\begin{enumerate}[$\bullet$]
\item
$e_1=-Ae_2/B$: Now $K_e$ possesses $1397$ terms and has to vanish identically as $dim(K_e)=2$ has to hold 
according to Corollary  \ref{cor:0}. 
We consider the coefficient of $e_0e_3$ of $K_e$ which equals $-8B_4B_5U_1U_2$ and cannot vanish without contradiction. 
\item
$e_0=0$: Now $K_e$ factors into $(e_1^2+e_2^2+e_3^2)G[185]$ where $G$ is of the form
$g_0+g_1R_1^2+g_2R_2^2+g_3R_3^2+g_4R_4^2+g_5R_5^2$, 
where all $g_i$ are functions in the geometry of the platform and the base.  
This equation can be solved for $R_3^2$ w.l.o.g.\ as $g_3$ equals $B^2U_2^2U_3$. 

Now we compute $f_0,f_1,f_3$ from $S=\Delta_2=\Delta_4=0$ w.l.o.g.. Plugging the obtained 
expressions into $\Delta_3$ and $\Delta_5$ imply in the numerators the following expressions:
\begin{equation}
-B_4U_1U_2(e_1^2+e_2^2+e_3^2)H \quad\text{and}\quad (e_1^2+e_2^2+e_3^2)H,
\end{equation}
respectively, with $H:=h_1e_1+h_2e_2$ and
\begin{equation}
\begin{split}
h_1&:=(R_2^2-R_5^2)A_4+(R_4^2-R_1^2)(A_5-1), \\
h_2&:=(R_2^2-R_5^2)B_4+(R_4^2-R_1^2)B_5.
\end{split}
\end{equation}
As $dim(K_e)=2$ has to hold according to Corollary  \ref{cor:0} the expression $H$ has to be 
fulfilled identically. 
Under the assumption $R_1^2\neq R_4^2$ we can compute $A_5$ and $B_5$ from 
$h_1=h_2=0$ but then we get $U_3=0$; a contradiction. As a consequence 
$R_1^2= R_4^2$ has to hold, which implies together with $h_1=h_2=0$ the condition $R_2^2= R_5^2$. 
This already yields a 2-dimensional self-motion. Moreover, due to $R_1^2= R_4^2$ we get $f_0=0$, 
which proves the line-symmetry of this self-motion (cf.\  \cite[Section 1]{icosapod}).
\end{enumerate}

\begin{rmk}
It should be noted that due to the existence of the similarity bond and 
$\kappa_2=id$ the Duporcq pentapods also have pure translational 1-dimensional 
self-motions (cf.\ proof of Corollary \ref{cor:0}). Moreover each 2-dimensional self-motion of a Duporcq pentapod 
contains a pure translational 1-dimensional sub-self-motion (obtained by $e_1=e_2=0$). \hfill $\diamond$
\end{rmk} 


\section{Conclusions}\label{sec:conclusion}

In light of the results of this paper, Theorem 4 of \cite{asme_ns} is not correct, but the 
flaw can be fixed by rewriting the phrase "which is not listed in Theorems 2 and 3" by 
"which is neither a Duporcq pentapod nor listed in Theorems 2 and 3". 

Furthermore we have to check if the Duporcq pentapods do not imply a further case in 
the list of non-architecturally singular hexapods with $2$-dimensional self-motions 
given in \cite[Theorem 5]{asme_ns}. Starting with a Duprocq pentapod, the two 
complete quadrilaterals are already determined and there is only one further point which 
has the same geometric properties with respect to this quadrilateral as the third anchor 
point. This is exactly the sixth anchor point illustrated in Fig.\ \ref{fig:complete_quadrilaterals}. 
But the resulting hexapod is architecturally singular as already mentioned in Subsection \ref{reason}, 
thus Theorem 5 of \cite{asme_ns} is correct. 


\section*{Acknowledgments}
The first author's research is funded by the Austrian Science Fund (FWF): P24927-N25 - ``Stewart Gough platforms with self-motions''. 
The second author's research is supported by the Austrian Science Fund (FWF): 
W1214-N15/DK9 and P26607 - ``Algebraic Methods in Kinematics: Motion Factorisation and Bond Theory''.

\end{document}